\theoremstyle{definition}
\theoremstyle{plain}
\newtheorem{theorem}{Theorem}[section]
\newtheorem{lemma}[theorem]{Lemma}
\theoremstyle{remark}
\newtheorem{remark}[theorem]{Remark}
\newcommand\numberthis{\addtocounter{equation}{1}\tag{\theequation}} % so we can tag the final equation in a list of aligns
\newcommand{\dkl}{\mathcal{D}_{\mathrm{KL}}} % KL divergence
\newcommand{\defn}{\coloneqq} %defined as
\newcommand{\vgf}{v^{\text{GF}}} % gradient flow velocity
\newcommand{\reals}{\mathbb{R}} %space of reals
\newcommand{\natrl}{\mathbb{N}} %space of natural numbers
\newcommand{\dfl}[1]{\: d #1} % differential
\newcommand{\lap}{\Delta} %laplacian
\newcommand{\Iddd}{I_{d \times d}}
\newcommand{\Indnd}{I_{Nd \times Nd}}
\newcommand{\gauss}{\mathcal{N}} % Normal
\newcommand{\rkhsd}{\mathcal{H}^d}
\newcommand{\rkhs}{\mathcal{H}}
\newcommand{\vsvgd}{v^{\text{SVGD}}}
\newcommand{\vlrho}{\mathcal{L}_{\rho}^2} % square integrable vector fields
\newcommand{\vsvn}{v^{\text{SVN}}}
\newcommand{\dsvn}{D^{\text{SVN}}}
\newcommand{\order}{\mathcal{O}}
\newcommand{\vdet}{v^{\textrm{DET}}}
\newcommand{\vstc}{v^{\text{STC}}}
\newcommand{\realsnd}{\reals^{Nd}}
\newcommand{\realsndnd}{\reals^{Nd \times Nd}}
\newcommand{\realsd}{\reals^{d}}
\newcommand{\kgram}{\bar{k}}
\newcommand{\wnfhess}{\tilde{H}}
\newcommand{\showfontsize}{\f@size{} pt}
\newcommand{\ibar}{%
  \text{\ooalign{\hidewidth -\kern-.1em-\hidewidth\cr$i$\cr}}%
}
\DeclareMathOperator*{\expv}{\mathbb{E}} % Expected value. Note that the * makes the underscores go UNDERNEATH
\DeclareMathOperator{\unif}{Unif}
\newcolumntype{+}{>{\global\let\currentrowstyle\relax}}
\newcolumntype{^}{>{\currentrowstyle}}
\begin{document}

\title{A stochastic Stein variational Newton method}

\author[1]{Alex Leviyev}
\author[2]{Joshua Chen}
\author[3]{Yifei Wang}
\author[2]{Omar Ghattas}
\author[1]{Aaron Zimmerman}
\affil[1]{Center for Gravitational Physics, University of Texas at Austin}
\affil[2]{Oden Institute, University of Texas at Austin}
\affil[3]{Department of Electrical Engineering, Stanford University}

\date{\today}
\maketitle

\begin{abstract}
Stein variational gradient descent (SVGD) is a general-purpose optimization-based sampling algorithm that has recently exploded in popularity, but is limited by two issues: it is known to produce biased samples, and it can be slow to converge on complicated distributions.
A recently proposed stochastic variant of SVGD (sSVGD) addresses the first issue, producing unbiased samples by incorporating a special noise into the SVGD dynamics such that asymptotic convergence is guaranteed.
Meanwhile, Stein variational Newton (SVN), a Newton-like extension of SVGD, dramatically accelerates the convergence of SVGD by incorporating Hessian information into the dynamics, but also produces biased samples.
In this paper we derive, and provide a practical implementation of, a stochastic variant of SVN (sSVN) which is both asymptotically correct and converges rapidly.
We demonstrate the effectiveness of our algorithm on a difficult class of test problems---the Hybrid Rosenbrock density---and show that sSVN converges using three orders of magnitude fewer gradient evaluations of the log likelihood than its stochastic SVGD counterpart.
Our results show that sSVN is a promising approach to accelerating high-precision Bayesian inference tasks with modest-dimension, $d\sim\order(10)$.\footnote{Our code is available at \url{https://github.com/leviyevalex/sSVN}}
\end{abstract}

%!TEX root = main.tex
\section{Introduction}
The goal of Bayesian inference in data analysis is to infer probability distributions (posteriors) for model parameters, given a dataset \cite{van2021bayesian}.
It is a powerful framework for parameter estimation, but poses significant computational challenges. The posterior must generally be evaluated numerically and the parameter space may be high dimensional.
Consequently, quantities of interest---such as moments and various other integrals of the posterior---are non-trivial to calculate.

Markov chain Monte Carlo (MCMC) is a widely used technique to approximate such integrals \cite{speagle2020conceptual}.
By defining a suitable Markov chain, one may in principle draw i.i.d.\thinspace samples from the posterior and thus calculate arbitrary integrals of the posterior with $\order(1/\sqrt{N})$ accuracy, where $N$ is the sample size drawn.
In many applications however, MCMC algorithms may be unacceptably slow to converge.

Variational inference (VI), on the other hand, is a technique which trades accuracy for speed. VI algorithms approximate posterior integrals by solving an optimization problem \cite{2017}.
A popular example of such an approach is Stein variational gradient descent (SVGD), which is a non-parametric VI algorithm which implements a form of functional gradient descent \cite{liuSteinVariationalGradient2016a}.
Likewise, Stein variational Newton (SVN) \cite{detommasoSteinVariationalNewton2018a,chen2020projectedSVN} extends SVGD by implementing a form of functional Newton descent, dramatically accelerating convergence to the posterior at the price of additional work per iteration.

Although VI algorithms were originally developed to accelerate Bayesian inference in machine learning, many areas of science and engineering similarly rely on Bayesian inference, albiet with a modest number of dimensions $d \sim \order(10)$.
This is the case, for example, in many astrophysical applications such as cosmology~\citep[e.g.][]{Planck:2018vyg} and gravitational wave astronomy \citep[e.g.][]{LIGOScientific:2021djp}.
VI offers the possibility of significantly accelerating inference, and is thus a tempting candidate to investigate further \cite{Gunapati2018VariationalIA}.
However, the inexact nature of VI limits its utility: especially in fields where high-precision posterior sampling is required.

Ideally one would construct an algorithm with the speed and flexibility of VI, but with the asymptotic convergence gaurantees of MCMC.
Indeed, an example of such an algorithm was proposed in \citet{gallegoStochasticGradientMCMC2020}, which showed that SVGD may be asymptotically corrected by adding in a special noise term, which leads to a stochastic SVGD (sSVGD) algorithm.
Compared to naive MCMC algorithms, sSVGD proposals are constructed by taking an SVGD descent step and centering a Gaussian at that updated point.
In this sense sSVGD proposals are well informed, ``aggressive,'' in the sense that the proposal is centered away from the current point, and consequently expected to perform more robustly compared to other MCMC proposals.
In this paper we show that it is possible to do the same with SVN, thus yielding a significantly faster, yet still asymptotically correct MCMC algorithm.

Our contributions are as follows:

\begin{itemize}
\item We show that adding both a \textit{stochastic} and a \textit{deterministic} correction to the SVN dynamics forms a Markov chain with asymptotically correct stationary density.
\item We introduce a practical implementation of sSVN, including a Levenberg-like damping term which improves the stability and globalization properties of the flow.
\item We demonstrate that our algorithm (sSVN) has excellent posterior reconstruction properties, and equilibrates with $\order(1000)$ times fewer gradient evaluations of the log-likelihood than sSVGD.
\end{itemize}

The outline of the paper is as follows.
We discuss related Newton-based MCMC methods in \cref{sec:related-work}.
In \cref{sec:background} we review SVGD,
and the diffusive MCMC recipe which motivates our proposed modifications to the SVN dynamics.
We derive sSVGD from the recipe, and then briefly review the standard SVN algorithm.
In \cref{sec:ssvn} we introduce our sSVN algorithm.
Finally we present our numerical results in \cref{sec:numerics}, and offer our conclusions and outlook in \cref{sec:conclusion}.

\section{Related work}\label{sec:related-work}
In this paper we propose and provide a practical implementation of a stochastic variant of Stein variational Newton (sSVN).
The sSVN update utilizes a Newton direction, and adds a special noise term found from the discretization of a certain stochastic differential equation, discussed in \cref{sec:ssvn}.
This resembles stochastic Newton (SN) \cite{martin2012stochastic}, which utilizes as a proposal a Newton step with noise added.
Whereas SN performs a Newton method on the posterior directly, sSVN simulates a Wasserstein-Newton flow (WNF) of the Kullback-Leibler divergence
\cite{wangInformationNewtonFlow2020, liuUnderstandingMCMCDynamics2019} (see \cref{sec:WNF}).
Further, this simulation of WNF, contrary to SN, yields dynamics for an interacting many-particle system, as opposed to SN, which yields dynamics for a single particle.
For more on Newton methods in MCMC, see \citet{martin2012stochastic, qi2002hessian, zhang2011quasi, simsekli2016stochastic} and references therein.
%!TEX root = main.tex

\section{Background}\label{sec:background}
\subsection{Standard SVGD}
SVGD--originally introduced in \citet{liuSteinVariationalGradient2016a}--is an attractively simple algorithm designed to sample from posterior distributions $\pi$ over $\realsd$.\footnote{
We choose $\realsd$ for simplicity.
Extensions to finite \cite{liu2017riemannian} and other infinite-dimensional manifolds \cite{jia2021stein} have been developed as well.}
Beginning with any $N \in \natrl^+$ number of ``particles'' at positions $z_m\in\realsd$, with $1 \le m \le N$, SVGD evolves these particles through the dynamics
\begin{equation}
\label{eq:svgd-dynamics}
\diff{z_m}{t} = \vsvgd(z_m) \,,
\end{equation}
where the particles are coupled through a velocity field $\vsvgd: \realsd \to \realsd$ defined by
\begin{equation}
\label{eq:SVGD-direction}
\vsvgd(z_m) = \frac{1}{N} \sum_{n=1}^N \brk[s]!{k(z_m, z_n) \nabla \ln \pi(z_n) + \nabla_2 k(z_m, z_n)} \,,
\end{equation}
where $k:\realsd \times \realsd \to \reals$ is a kernel describing the interaction between particles, and $\nabla_2 k(z_m, z_n)$ represents the gradient of the kernel with respect to the second argument $z_n$.
An Euler discretization of \cref{eq:SVGD-direction} in fact yields the SVGD algorithm which is summarized in \cref{algo:svgd}.
\begin{algorithm2e}
\SetAlgoLined
\KwIn{Initialize particles $\brk[c]{z^1_m}_{m=1}^N$, timestep $\tau>0$}
 \For{$l = 1, 2, \ldots, L$}{
Calculate $\vsvgd$ \;
$z_m^{l+1} \leftarrow z_m^l + \tau \vsvgd(z_m^l) \quad \forall m$ \;
 }
 \caption{SVGD}\label{algo:svgd}
\end{algorithm2e}

There are several useful features of the velocity field \cref{eq:SVGD-direction}.
First, these dynamics minimize the KL-Divergence $\dkl(\rho, \pi)$ where $\rho(x) = \sum_m \delta(x - z_m)$ represents the empirical measure over $\realsd$ of an ensemble $\brk[c]{z_m}_{m=1}^N$, and $\pi$ is the usual posterior.
Specifically, these dynamics can be shown to be associated with a gradient descent of $\dkl$.
This gives us a loose guarantee that once the particles equilibrate they will approximate the first few moments of the posterior reasonably well.

\subsection{sSVGD}\label{sec:ssvgd}
Although SVGD has has shown promise in application \cite{pmlr-v97-gong19b,10.1093/gji/ggaa170,pinder2021stein,chen2020projectedSVGD}, it is not without its downsides.
For example, SVGD provides an inherently biased estimate of a distribution and is known to underestimate the variances \cite{anonymous2022understanding}.
Mischaracterizations of this type may make SVGD unsuitable for applications where high precision posterior reconstruction is necessary.
Although several mean-field and asymptotic convergence results have been established \cite{korba2021nonasymptotic,liu2018stein,liu2017stein,duncanGeometrySteinVariational2019}, it would nonetheless be desirable to have finite particle, asymptotic convergence guarantees.
sSVGD, first introduced in \citet{gallegoStochasticGradientMCMC2020}, addresses these issues by adding a (computationally negligible) Gaussian noise into the SVGD dynamics: transforming the dynamics into a Markov chain with asymptotic guarantees in the continuous time limit.
Of considerable interest is that adding noise allows us to begin collecting samples after a burn in period, as opposed to evolving a large number of particles from the onset.
Lastly, in discrete time this scheme supports a Metropolis-Hastings correction which in theory eliminates \textit{all} sources of bias present in the dynamics, leading to a truly ``correct" sampling scheme.

In this section we review how to derive this SVGD noise term and review the sSVGD algorithm.
Before doing so we briefly discuss the MCMC recipe framework and present results needed to motivate both sSVGD and later sSVN.
\paragraph{Diffusive MCMC over configuration space}
Suppose $\brk[c]{z_m}_{m=1}^N$ denotes an ensemble of $N$ particles, and let us define the associated \textit{configuration space} of the ensemble by $X \defn \brk[c]{z: z=[z_1^{\top}, \ldots, z_N^{\top}]^{\top}}$. Furthermore, let us lift the score function into this configuration space by defining $\nabla \ln \pi(z) \in \realsnd$ such that $\nabla \ln \pi(z)=[\nabla^{\top} \ln \pi (z_1), \ldots, \nabla^{\top} \ln \pi(z_N)]^{\top}$. Then using the results of \citet{maCompleteRecipeStochastic2015} one many construct a Markov chain over $X$ with invariant density $\Pi_{n=1}^N \pi(z_n)$ by discretizing the following Ito equation
\begin{equation}\label{eq:ito-recipe}
\dfl{z} = f(z) \dfl{t} + \sqrt{2 D(z)} \dfl{B},
\end{equation}
where the \textit{drift} $f:\realsnd \to \realsnd$ is given by
\begin{align*}
f(z) = \brk[s]!{D(z) + Q(z)} \nabla \ln \pi(z) + \nabla \cdot \brk[s]!{D(z) + Q(z)}\,,
\end{align*}
where $D, Q: \realsnd \to \realsndnd$ are a positive semi-definite \textit{diffusion matrix} and skew symmetric \textit{curl matrix} respectively, the divergence is understood to sum into the second index of the matrices, and $B$ is a $Nd$-dimensional Brownian motion.
In practice an Euler-Maruyama discretization of \cref{eq:ito-recipe} is taken, which yields the following Markov chain\footnote{
Note we have corrected a sign in the corresponding equation in \citet{maCompleteRecipeStochastic2015}.}:
\begin{align*}
\label{eq:eps-discretization-sampler}
z^{l+1} \leftarrow & z^l + \tau \brk[s]!{D(z^l) + Q(z^l)}\nabla \ln \pi(z^l) + \nabla \cdot \brk[s]!{D(z^l) + Q(z^l)} +  \gauss(0, 2 \tau D(z^l)) \,,
\numberthis
\end{align*}
where $\gauss(\mu, \Sigma)$ denotes a Gaussian random variable with mean $\mu\in\realsnd$ and covariance $\Sigma \in \realsndnd$.

With the stage set, let us define the matrix function $K: \realsnd \to \realsndnd$ with
\begin{equation}\label{eq:svgd-diffusion-matrix}
K(z) \defn \frac{1}{N}
\begin{pNiceMatrix}[]
k(z_1, z_1) \Iddd & \cdots & k(z_1,z_N) \Iddd\\
\vdots & \ddots & \vdots\\
k(z_N,z_1) \Iddd & \cdots & k(z_N,z_N) \Iddd
\end{pNiceMatrix} \,,
\end{equation}
then it follows directly that \cite{gallegoStochasticGradientMCMC2020}
\begin{lemma}[SVGD recast into MCMC recipe form]\label{prop:SVGD-MCMC-equivalence}
Suppose that $\brk[c]{z_m}_{m=1}^N$ is an ensemble of particles, and $k$ is a kernel such that for every particle $z_m$ in the ensemble $\nabla_1 k(z_m, z_m) = 0$ holds. Then SVGD may be expressed as
\begin{equation}
\label{eq:SVGDrecipeform}
\diff{z}{t} = K(z) \nabla \ln \pi(z) + \nabla \cdot K(z) \,
\end{equation}
over configuration space X.
\end{lemma}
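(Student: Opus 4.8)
The plan is to verify \cref{eq:SVGDrecipeform} blockwise. Both sides live in $\realsnd$, naturally partitioned into $N$ blocks of size $d$ indexed by the particle label $m$; by \cref{eq:svgd-dynamics} the $m$-th block of the left-hand side is exactly $\vsvgd(z_m)$, so it suffices to show that the $m$-th block of $K(z)\nabla\ln\pi(z) + \nabla\cdot K(z)$ coincides with the right-hand side of \cref{eq:SVGD-direction}. I would treat the two summands separately.

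For the first term, recall that the $(m,n)$ block of $K(z)$ is $\tfrac1N k(z_m,z_n)\Iddd$ and the $n$-th block of $\nabla\ln\pi(z)$ is $\nabla\ln\pi(z_n)$. Block matrix–vector multiplication then immediately gives that the $m$-th block of $K(z)\nabla\ln\pi(z)$ is $\tfrac1N\sum_{n=1}^N k(z_m,z_n)\nabla\ln\pi(z_n)$, which matches the first summand in \cref{eq:SVGD-direction}.

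For the second term I would compute $\nabla\cdot K(z)$ carefully, keeping in mind that the divergence sums into the second (column) index, i.e. $(\nabla\cdot A)_i=\sum_j\partial_j A_{ij}$. Writing configuration-space coordinates as pairs $(m,a)$ with $1\le m\le N$ and $1\le a\le d$, the $(m,a)$ component is $\sum_{n=1}^N\sum_{b=1}^d\partial_{z_n^b}\!\bigl[\tfrac1N k(z_m,z_n)\delta_{ab}\bigr]$. For $n\ne m$ the derivative only reaches the second argument and, after the sum over $b$, contributes $\tfrac1N[\nabla_2 k(z_m,z_n)]_a$. The delicate case is $n=m$: then $z_m$ appears in \emph{both} slots of $k(z_m,z_m)$, so the chain rule produces $\tfrac1N\bigl\{[\nabla_1 k(z_m,z_m)]_a+[\nabla_2 k(z_m,z_m)]_a\bigr\}$. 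Summing over $n$, the $m$-th block of $\nabla\cdot K(z)$ equals $\tfrac1N\sum_{n=1}^N\nabla_2 k(z_m,z_n)+\tfrac1N\nabla_1 k(z_m,z_m)$.

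It remains only to invoke the hypothesis $\nabla_1 k(z_m,z_m)=0$, which annihilates the anomalous diagonal term and leaves exactly $\tfrac1N\sum_{n=1}^N\nabla_2 k(z_m,z_n)$, the second summand in \cref{eq:SVGD-direction}. Adding the two contributions reconstructs $\vsvgd(z_m)$ in each block, which proves \cref{eq:SVGDrecipeform} over configuration space $X$. The only real obstacle is the bookkeeping for the diagonal block — recognizing that ``divergence into the second index'' still differentiates through the coincidence $z_n=z_m$ in the first argument of the kernel — and this is precisely the point at which the assumption $\nabla_1 k(z_m,z_m)=0$ (satisfied, for instance, by any differentiable radial kernel) is indispensable.
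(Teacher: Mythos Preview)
Your proof is correct and follows essentially the same approach as the paper: both compute the two terms of \cref{eq:SVGDrecipeform} block by block, identify the first with the kernel-averaged score and the second with the repulsion term plus an extra $\tfrac1N\nabla_1 k(z_m,z_m)$ contribution arising from the diagonal, and then invoke the flat-top hypothesis to remove that contribution. The paper carries this out in the index notation $\phi(m,i)$ rather than your block notation, but the content is identical.
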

\begin{proof}
Follows directly from \cref{eq:SVGD-direction}. See the Appendix for details.
\end{proof}
Moving forward, we suppress the dependency of $K$ on $z$ when convenient.
\begin{remark}
\cref{eq:SVGDrecipeform} is of the form \cref{eq:ito-recipe} with diffusion matrix $D=K$ and curl matrix $Q=0$. Note, however, that that \cref{eq:SVGDrecipeform} only takes into account the drift.
\end{remark}

\paragraph{Cost of calculating noise}
\cref{eq:SVGDrecipeform} immediately suggests adding a noise term drawn from $\gauss(0,2K)$.
Naively, such a draw would require calculating the lower triangular Cholesky decomposition of an $Nd \times Nd$ matrix, which may be expensive.
Instead we may exploit the fact that $P K P^{\top} = D_K$ where $D_K$ is the block-diagonal matrix
\begin{equation}\label{eq:DK}
D_K \defn
\frac{1}{N}
\left[
\begin{array}{ccc}
\kgram &  & \\
 & \ddots & \\
 &  & \kgram \\
\end{array}
\right] ,
%
% \begin{pNiceArray}{CCC}
% \kgram &  & \\
%  & \ddots & \\
%  &  & \kgram \\
% \end{pNiceArray} ,
%
\end{equation}
where $\kgram \in \reals^{N \times N}$ is the kernel gram matrix with components $\kgram_{mn} = k(z_m, z_n)$, and $P$ is the permutation matrix whose action $Pz$ on a vector $v \in \realsnd$ performs a transformation of basis from one where the coordinates of each particle are listed sequentially to one where the first coordinate of each particle is listed, then the second, and so on.\footnote{See \cref{fig:coordinate-to-particle}, and for a proof of this fact see \cref{sec:Proofs}.}
This yields
\begin{align*}
\vstc &\sim \gauss(0, 2K) \\
&\sim \sqrt{2} P^{\top} P \gauss(0, K) \\
&\sim \sqrt{2}P^{\top}\gauss(0, D_K) \\
&\sim\sqrt{2} P^{\top}L_{D_K} \gauss(0, \Indnd) \,, \numberthis \label{eq:v-stc-svgd}
\end{align*}
where $L_{D_K}$ denotes the lower triangular Cholesky decomposition of $D_K$, and only requires calculating the lower triangular Cholesky decomposition of the kernel gram matrix, $L_{\kgram}$.
Since $\kgram$ is independent of $d$ and in practice $N$ is modest, evaluating this noise is computationally negligible.
Finally, we note that the action of $P^{\top}$ is to perform a simple tensor reshaping, and is thus trivial to implement.
sSVGD is thus a simple modification of SVGD, and is summarized in \cref{algo:ssvgd}.
For an equivalent description to \cref{algo:svgd} set $\vstc=0$.
\begin{algorithm2e}
\SetAlgoLined
\KwIn{Initialize ensemble $z^1$, timestep $\tau>0$}
 \For{$l = 1, 2, \ldots, L$}{
Calculate $\vsvgd(z^l)$ using \cref{eq:SVGD-direction}\;
Calculate $\vstc(z^l)$ using \cref{eq:v-stc-svgd}\;
$z^{l+1} \leftarrow z^l + \tau \vsvgd(z^l) + \sqrt{\tau} \vstc(z^l)$ \;
 }
 \caption{Stochastic SVGD}\label{algo:ssvgd}
\end{algorithm2e}

\subsection{Standard SVN}

The SVN algorithm \cite{detommasoSteinVariationalNewton2018a} extends SVGD by solving the following linear system for coefficients $\alpha \in \realsnd$
\begin{equation}\label{eq:svn-linear-system}
H(z) \alpha =  v^{\text{SVGD}}(z) \,,
\end{equation}
where $H: \realsnd \to \realsnd \times \realsnd$ denotes the \textit{SVN-Hessian} and takes the form

\begin{equation}\label{eq:SVN-Hessian}
H(z) \defn
\left[
\begin{array}{ccc}
h^{11} & \cdots & h^{1N}\\
\vdots & \ddots & \vdots\\
h^{N1} & \cdots & h^{NN}\\
\end{array}
\right] \,,
\end{equation}

% \begin{equation}\label{eq:SVN-Hessian}
% H(z) \defn
% {\begin{bNiceArray}{CCC}
% h^{11} & \cdots & h^{1N}\\
% \vdots & \ddots & \vdots\\
% h^{N1} & \cdots & h^{NN}\\
% \end{bNiceArray}} \,,
% %
% \end{equation}

and for $1 \le m, n \le N$ the blocks $h^{mn} \in \reals^{d \times d}$ are defined by
\begin{align*}
h^{mn} \defn \frac{1}{N}\sum_{p=1}^N \big[&- k(z_p, z_m) k(z_p, z_n) \nabla^2\ln \pi(z_p)
+ [\nabla_1 k(z_p, z_n)] \nabla^{\top}_1 k(z_p, z_m) \big] \,. \numberthis \label{eq:blocks}
\end{align*}
Finally, one forms the SVN direction by
\begin{equation}\label{eq:svn-direction}
\vsvn(z) = N K(z) \alpha(z) \,,
\end{equation}
and uses this velocity field to drive the system of particles in place of $\vsvgd$.
From here we suppress the dependency of $H$ on $z$ when convenient.

%!TEX root = main.tex
\section{Stochastic SVN}\label{sec:ssvn}

\subsection{Correcting SVN}

We can derive a stochastic SVN algorithm by observing that the SVN velocity field can be rewritten as
\begin{align*}
\vsvn &= N K H^{-1} \vsvgd \\
&= N K H^{-1} K \nabla \ln \pi + N K H^{-1}\nabla \cdot K \,, \numberthis
\label{eq:pre-corrected-svn}
\end{align*}
using \cref{eq:svgd-diffusion-matrix,eq:svn-linear-system,eq:svn-direction}.
From this we can see that by defining a diffusion matrix $\dsvn =N K H^{-1} K$, the SVN dynamics can be brought into the form of an MCMC recipe with $Q = 0$ by adding both a deterministic and stochastic correction.
Indeed, $\dsvn$ is symmetric by construction, and is positive definite if $H$ is positive definite. Note that this is not true in general, even for log convex problems, and thus requires modifying $H$.
The first term in \cref{eq:blocks} is positive definite if $\nabla^2 \ln \pi(z_m)$ is positive definite for every $1 \le m \le N$.
This may be accomplished, for example, by replacing the Hessian of the log-likelihood with a Gauss-Newton approximation.
Likewise, one may guarantee that the second term in \cref{eq:blocks} is positive semi-definite by taking the block diagonal approximation.
This ensures that each block takes the form $(1/N)\sum_n \nabla_1 k(z_m, z_n) \nabla_1^{\top} k(z_m, z_n)$, and ensures each block is positive semi-definite.

With this insight and the results of \citet{maCompleteRecipeStochastic2015} we have
\begin{theorem}[Asymptotically correct SVN]\label{thm:SVN-recipe}
The following Ito equation
\begin{align}
\dfl{z} = (\vsvn + \vdet)\dfl{t} + \sqrt{2 \dsvn} \dfl{B} \label{eq:svnsde}
\end{align}
has invariant distribution $\Pi_{n=1}^N \pi(z_n)$ over $X$, where
$\vdet \in \realsnd$ is defined by
\begin{equation}\label{eq:deterministic-correction}
\vdet_a = N K_{bc} \nabla_c (K_{ae} H_{eb}^{-1}) \,,
\end{equation}
and $B \in \realsnd$ is a standard Brownian motion.
\end{theorem}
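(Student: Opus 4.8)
The plan is to realize \cref{eq:svnsde} as a special case of the complete recipe of \citet{maCompleteRecipeStochastic2015} recalled around \cref{eq:ito-recipe}: an Ito diffusion $\dfl{z} = f(z)\dfl{t} + \sqrt{2 D(z)}\dfl{B}$ whose drift has the form $f = [D+Q]\nabla \ln \pi + \nabla \cdot [D+Q]$, with $D$ symmetric positive semi-definite, $Q$ skew-symmetric, and both sufficiently regular, leaves $\Pi_{n=1}^N \pi(z_n)$ invariant over $X$ (this being the lift of $\pi$, since $\ln \pi(z) = \sum_{n} \ln \pi(z_n)$ under the configuration-space lift of the score). Hence it suffices to check that \cref{eq:svnsde} is precisely this recipe with $D = \dsvn = N K H^{-1} K$ and $Q = 0$, which amounts to two things: (i) $\dsvn$ is a legitimate diffusion matrix, and (ii) the drift matches, i.e. $\vsvn + \vdet = \dsvn \nabla \ln \pi + \nabla \cdot \dsvn$.

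For (i) I would note that $\dsvn = N K H^{-1} K$ is symmetric by construction and positive definite whenever $H$ is, which holds after the modifications to $H$ described just above the theorem (a Gauss--Newton surrogate for $\nabla^2 \ln \pi$ and a block-diagonal truncation of the second term of \cref{eq:blocks}); smoothness of $\dsvn$ follows from smoothness of the kernel and of the modified Hessian blocks. Thus the regularity hypotheses of the recipe are met, and $Q = 0$ is trivially admissible.

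For (ii), \cref{eq:pre-corrected-svn} already gives $\vsvn = \dsvn \nabla \ln \pi + N K H^{-1} \nabla \cdot K$, so the whole statement collapses to the purely algebraic identity $\vdet = \nabla \cdot \dsvn - N K H^{-1}\nabla \cdot K$. I would verify this in index notation with the convention (as in \cref{eq:ito-recipe}) that the divergence contracts the second matrix slot: expanding $[\nabla\cdot\dsvn]_a = N \nabla_b(K_{ae} H^{-1}_{ef} K_{fb})$ by the product rule yields three terms; the one in which $\nabla_b$ hits $K_{fb}$ is exactly $[N K H^{-1}\nabla \cdot K]_a$, and relabeling the dummy indices of the other two (the term where the derivative lands on the leading $K$ and the term where it lands on $H^{-1}$) reproduces, term by term, $\vdet_a = N K_{bc}\nabla_c(K_{ae}H^{-1}_{eb})$ after its own product-rule expansion. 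Combining (i) and (ii) and invoking the recipe then finishes the proof.

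The \emph{main obstacle} is nothing deep --- it is bookkeeping: fixing the divergence convention (which index is contracted), remembering that $\nabla \cdot \dsvn$ breaks into three pieces rather than two because $K$ appears on both sides of $H^{-1}$, and matching dummy indices so the differentiated-kernel and differentiated-Hessian pieces line up with the two terms hidden inside $\vdet$. Symmetry of $K$ and of $H^{-1}$ can be used to tidy intermediate expressions but is not strictly required. I would also state explicitly the technical assumptions under which \citet{maCompleteRecipeStochastic2015} applies (e.g. a smooth kernel such as the Gaussian, and a positive-definite, smoothly varying modified Hessian, so that $\dsvn$ is $C^\infty$ and uniformly elliptic on bounded sets), since these are what make the formal invariance computation a genuine statement about the stationary law.
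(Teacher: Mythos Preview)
Your proposal is correct and follows essentially the same approach as the paper: invoke the recipe of \citet{maCompleteRecipeStochastic2015} with $D=\dsvn=NKH^{-1}K$ and $Q=0$, verify positive definiteness of $\dsvn$ (the paper notes it is a congruence transformation of $H^{-1}$), and then establish the drift identity by expanding $\nabla\cdot\dsvn$ in index notation with the product rule so that one piece combines with $\dsvn\nabla\ln\pi$ to give $\vsvn$ via \cref{eq:pre-corrected-svn} and the remaining pieces give $\vdet$. The paper's proof is terser and does not spell out the regularity hypotheses you mention, but the structure and the key algebraic step are the same.
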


In the above we have used index notation to express $\vdet$, with repeated indices summed. As was done in \citet{maCompleteRecipeStochastic2015} and \citet{gallegoStochasticGradientMCMC2020}, a simple Euler-Maruyama discretization of \cref{eq:svnsde} suffices as an MCMC proposal.
Namely,
\begin{equation}\label{eq:sSVNDynamics}
z^{l+1} = z^{l} + (\vsvn + \vdet) \tau + \sqrt{\tau} \vstc
\end{equation}
where $\vstc \sim  \gauss(0,2 \dsvn)$.
This forms the basis of a practical sSVN algorithm.

\subsection{Practical algorithm}\label{subsec:practical-algorithm}
While \cref{eq:sSVNDynamics} provides an MCMC recipe based on SVN, several modifications are needed for a stable, fast, and practical algorithm.

\paragraph{Levenberg damping}
For ensemble configurations that are far from equilibrium it is not uncommon for Newton's method to lead to bad steps.
Furthermore, solving the system \cref{eq:svn-linear-system} may be difficult because of a poorly conditioned $H$.
To address these, we introduce a \textit{Levenberg-like damping} to the Hessian. Namely, we use
\begin{equation}\label{eq:damped-hessian}
H_{\lambda} = H + \lambda N K
\end{equation}
rather than $H$ in our diffusion matrix.
In addition to improving the conditioning of $H$, for sufficiently large $\lambda>0$ this damping procedure acts as a simple heuristic to preserve the fast equilibration of Newton while inheriting the stability of gradient descent.
Indeed, consider replacing $\dsvn$ with
\begin{align*}
\dsvn \leftarrow N K \brk[r]{H + \lambda NK}^{-1} K \,.
\end{align*}
Then $\lim_{\lambda \uparrow \infty} \dsvn = K/\lambda$.
Thus, as $\lambda$ increases, the diffusion matrix approaches that of SVGD, and thus hybridizes SVN with SVGD.
We observe that such a modification improves the numerical stability and sample quality of the Newton flow.

\paragraph{Asymptotic correctness}
Note that $\vdet$ accounts for local curvature variations, and appears in other higher order dynamics such as Riemannian Hamiltonian Monte Carlo \cite{girolami2011riemann}, Riemannian Langevin dynamics \cite{NIPS2013_309928d4}, Riemannian Stein variational Gradient descent \cite{liu2017riemannian}, and stochastic Newton \cite{martin2012stochastic}.
However, calculating $\vdet$ requires third order derivatives of the posterior and is a significant computational burden.
As done in previous works, we propose neglecting $\vdet$ in the dynamics.
In practice, we were still able to collect high quality samples in the numerical experiments presented in \cref{sec:numerics} while simply neglecting $\vdet$.
\begin{remark}[]\label{}

If it becomes necessary to correct any bias introduced by neglecting $\vdet$, we propose two possible solutions.
The first is to introduce a ``damping schedule" for $\lambda$ similar to \citet{dangelo2021annealed}.
If $\lambda$ is made sufficiently large towards the end of the flow, then the asymptotic guarantees of sSVGD will be inherited.
The second solution is to incorporate a Metropolis-Hastings correction.
 \end{remark}
\paragraph{Noise addition}
The discretization of the noise term in \cref{eq:svnsde} may be calculated with
\begin{align*}
\vstc &= \sqrt{2N}\gauss(0, K (H + \lambda N K)^{-1} K) \\
&= \sqrt{2N}K L^{-\top}_{H_{\lambda}} \gauss(0,\Indnd)\numberthis \,,
\label{eq:svn-noise-injection}
\end{align*}
where $L_{H_\lambda}$ represents the lower triangular Cholesky decomposition of the damped Hessian and $(\cdot)^{-\top}$ is the inverse transpose operation.
The proposed algorithm is summarized in \cref{algo:ssvn}.
\begin{algorithm2e}
\SetAlgoLined
\KwIn{Initialize ensemble $z^1$, $\lambda>0$, $\tau>0$}
 \For{$l = 1, 2, \ldots, L$}{
Calculate damped Hessian $H_{\lambda}(z^l)$ \cref{eq:damped-hessian} \;
Calculate Cholesky decomposition $L_{H_{\lambda}}$ \;
Cholesky solve for $\alpha$ and form $\vsvn$ \cref{eq:svn-linear-system}\; 
Triangular solve and form $\vstc(z^l)$ \cref{eq:svn-noise-injection}\;
$z^{l+1} \leftarrow z^l + \tau \vsvn(z^l) + \sqrt{\tau} \vstc(z^l)$ \;
 }
 \caption{Stochastic SVN (Cholesky)}\label{algo:ssvn}
\end{algorithm2e}

\subsection{Complexity analysis}
The proposed algorithm stores the Hessian at a space complexity of $\order(N^2d^2)$, and performs a Cholesky decomposition with time complexity $\order(N^3 d^3)$.
The Cholesky solve for $\vsvn$ and the triangular solve for $\vstc$ does not affect the overall scaling, as the solves take $\order(N^2 d^2)$ time.
Thus the method is bound either by the cost of decomposing $H$, or by gradient and Hessian evaluations of the log-likelihood.
In \cref{sec:scaling} we propose an alternative numerical scheme utilizing a Krylov solver--- requiring only the formation of matrix-vector products, and thus does not require forming $H$.
Further, the Krylov iterations may be terminated when appropriate norm conditions are reached to improve time-scaling.

%!TEX root = main.tex
\section{Numerical experiments}\label{sec:numerics}

Here we present numerical comparisons of our sSVN algorithm to other methods, primarily using the Hybrid Rosenbrock distribution \cite{pagani2020ndimensional} as our posterior, in two, five, and ten dimensions.
This distribution can be flexibly adjusted to be highly correlated in each dimension, with long tails that are challenging to sample; in addition, it is designed such that we can draw a large number of unbiased i.i.d.\thinspace samples ({\it Truth} or {\it ground truth} in the following) to compare to our results. Using this ground truth we evaluate the performance of sSVN using several metrics such as: maximum mean discrepancy (MMD) \cite{JMLR:v13:gretton12a}, comparing means $\mu_i$ and diagonal of the covariance $\sigma_{ii}$ in each dimension $1 \le i \le d$, P-P plots \cite{doi:10.1080/00031305.1991.10475759,loy2015variations}, and corner plots.
All test cases that follow use the Gauss-Newton approximation to ensure that the SVN Hessian $H$ is positive definite.
Likewise, we use the kernel \cite{detommasoSteinVariationalNewton2018a}
\begin{equation}\label{eq:detomasso-kernel}
k(x, y) = e^{-\frac{1}{2h} (x-y)^{\top} M (x-y)} \,,
\end{equation}
with a fixed {\it bandwidth} $h=d$, and {\it metric} $M$ taken to be either the average Gauss-Newton Hessian over the set of particles unless specified otherwise.
Finally, in all sSVN experiments we use a constant damping of $\lambda = 0.01$ for simplicity.
\subsection{Two-dimensional test cases}

\begin{figure}[tb]
\centering
\includegraphics{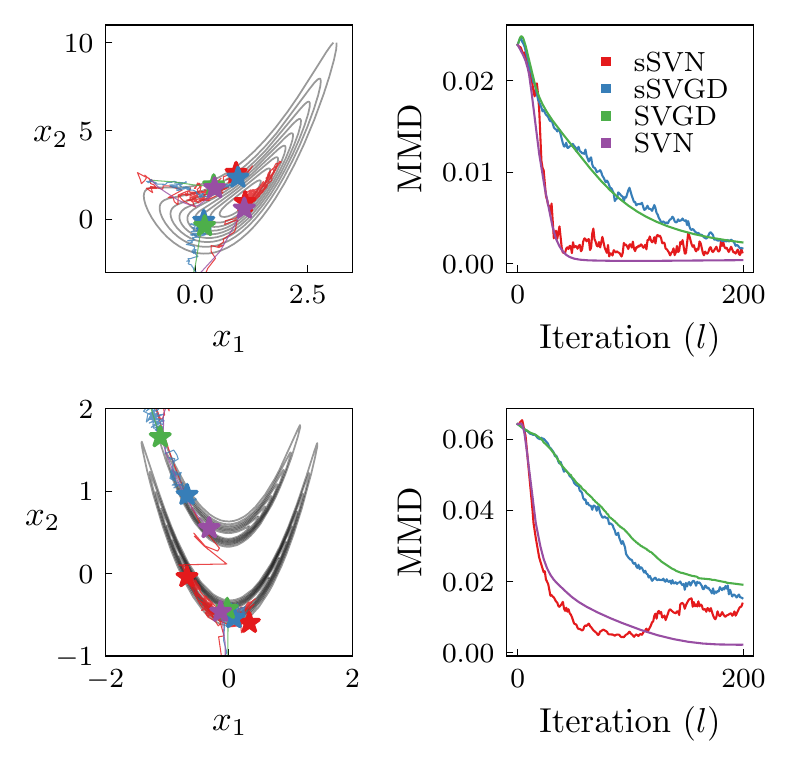}
\caption{Two-dimensional experiments. Top row is a Hybrid-Rosenbrock density with parameters $n_2=1$, $n_1=2$, $a=0.5$, $b=0.5$, and the bottom row is a double-banana.
The left column illustrates the geometry of the posterior, and displays the trajectory one of the particles follows along the various dynamics. The right column displays the evolution of the MMD using $N=100$ particles and 300 samples from the ground truth.}
\label{fig:2d-exp}
\end{figure}

We begin with two low dimensional toy problems.
The first is the Hybrid Rosenbrock distribution \cite{pagani2020ndimensional} with parameters $n_2=1$, $n_1=2$, $a=0.5$, and $b=0.5$, and the second is a double banana as described in \citet{detommasoSteinVariationalNewton2018a}.
We use $N=100$ particles, initially sampled from $\unif[-6, 6]$, and evolve for $L=200$ iterations with step size $\tau=0.1$.

\cref{fig:2d-exp} illustrates particle trajectories traced out by SVGD, SVN, and their stochastic counterparts.
Of particular interest is that sSVN appears more efficient in its exploration of the posterior---in the sense that it explores more of the posterior in the same amount of time.
In addition, the sSVN noise appears to facilitate mode hopping.
We observe that the deterministic and stochastic counterparts make similar progress in MMD.

The settings for the Hybrid Rosenbrock density and the step size in this numerical experiment were chosen deliberately in order to compare all flows at the same timescale.
It is important to note we numerically observe that a posterior with sufficiently narrow ridges causes sSVGD to produce overflow errors if the stepsize is not sufficiently small (even for two-dimensional cases).
On the contrary, sSVN inherits the desirable affine invariance of SVN, allowing for larger stepsizes that are more robust to narrow ridges.
For the posteriors studied in this paper, we observe that $\tau=0.1$ is a good step size for sSVN, and $\tau=0.01$ for sSVGD.
Our subsequent experiments use these values.

\subsection{Five-dimensional test case}

\begin
{figure}[tb]
\centering
   \includegraphics{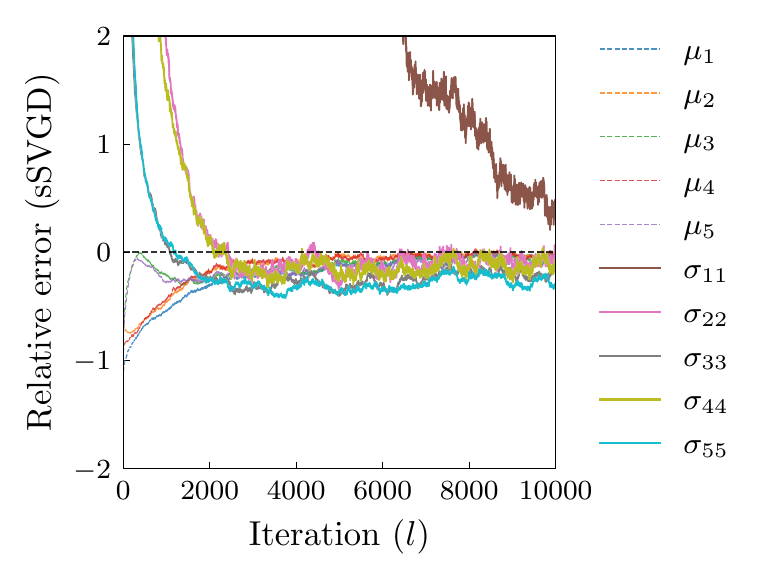} \\
   \includegraphics{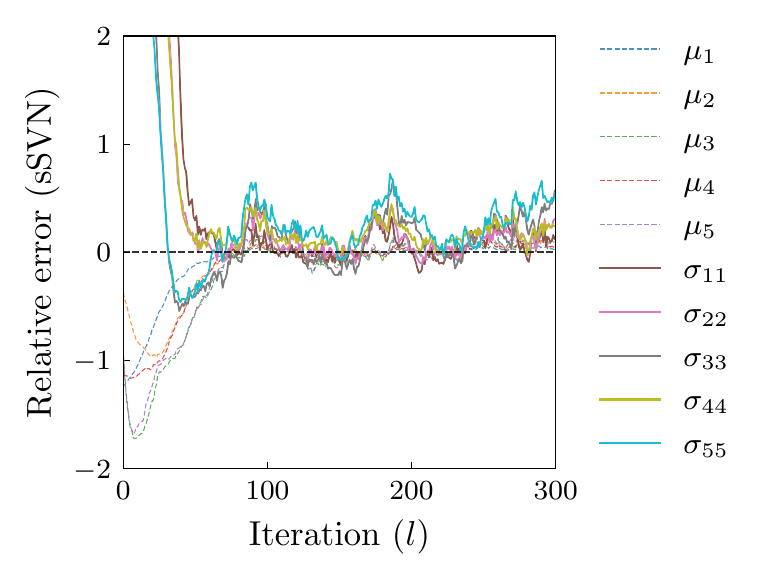}
\caption{Five-dimensional experiments. Evolution of moments for sSVGD (top) and sSVN (bottom).}
\label{fig:5d-moments}
\end{figure}

In this experiment we use a Hybrid Rosenbrock with parameters $n_2=2$, $n_1=3$, $a=10$, $b=30$ and compare only sSVGD and sSVN, using $N=100$ particles.
We display the flow of the mean and variances of the ensemble to their converged values in \cref{fig:5d-moments}.
Interestingly, sSVGD requires over $10^4$ iterations for $\sigma_{11}$ to converge, wheras all moments converge rapidly within $100$ iterations in sSVN.
We collect samples from the last $100$ iterations of both sSVGD and sSVN---neglecting issues related to sample autocorrelation for simplicity---and compare them to an i.i.d sample set of the same size in \cref{fig:5d-corner}.
The samples drawn from both sSVGD and sSVN appear to resemble the posterior well; however, sSVN exhibits an order of magnitude advantage in the number of iterations required.
Since $N=100$, this corresponds to an $\order(10^3)$ reduction in the number of gradient evaluations of the log-likelihood evaluations performed.

\begin{figure}[!tb]
\centering
\includegraphics[width=0.45\textwidth]{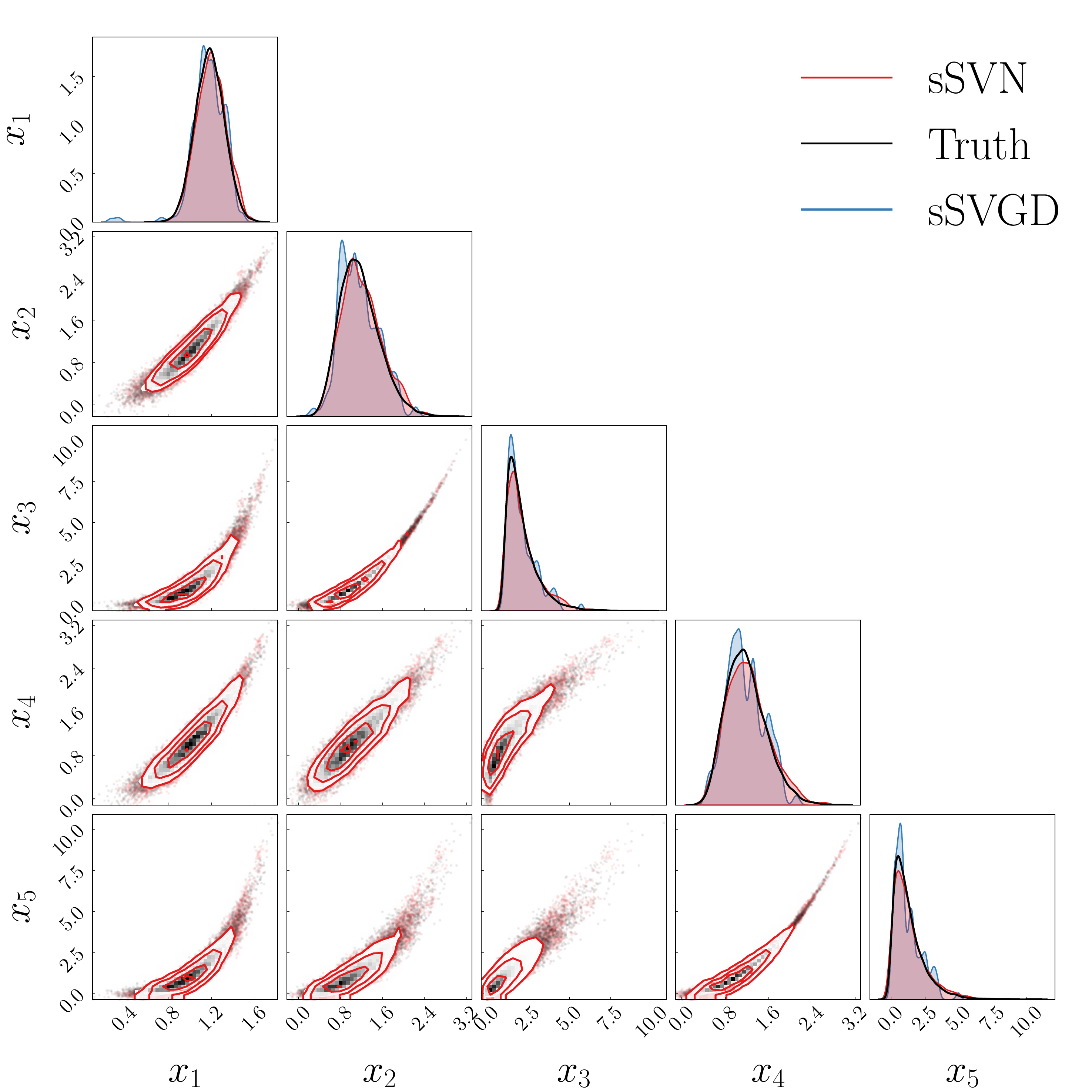}
\caption{
Five-dimensional Hybrid Rosenbrock corner plot comparing ground truth (black) with the output of sSVN (red) and sSVGD (blue). The diagonal figures compare one-dimensional KDEs of the marginals for sSVN, sSVGD, and ground truth. The off-diagonal figures compare only sSVN to ground truth for clarity, by plotting samples as well as contours in increments of $\sigma/2$ (sSVN) and a two-dimensional histogram of the density (ground truth). 
}
\label{fig:5d-corner}
\end{figure}

\subsection{Ten-dimensional test case}

\begin{figure}[tb!]
\centering
\begin{subfigure}[b]{0.4\textwidth}
   \includegraphics[width=1\linewidth]{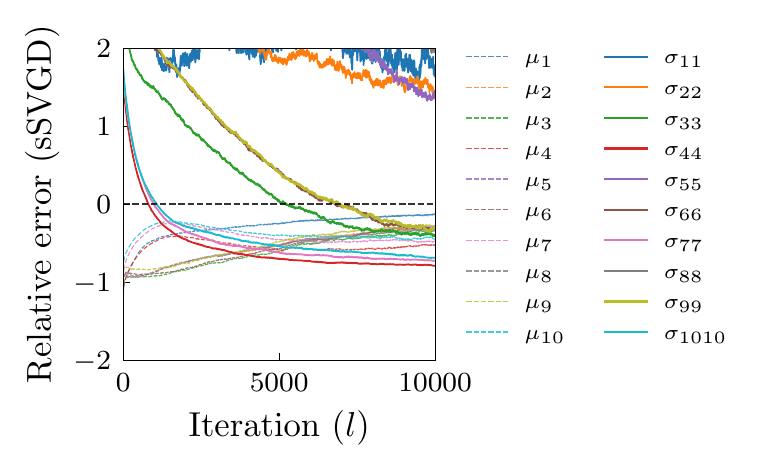}
   \label{subfig:10d-moments-ssvgd}
\end{subfigure}
\begin{subfigure}[b]{0.4\textwidth}
   \includegraphics[width=1\linewidth]{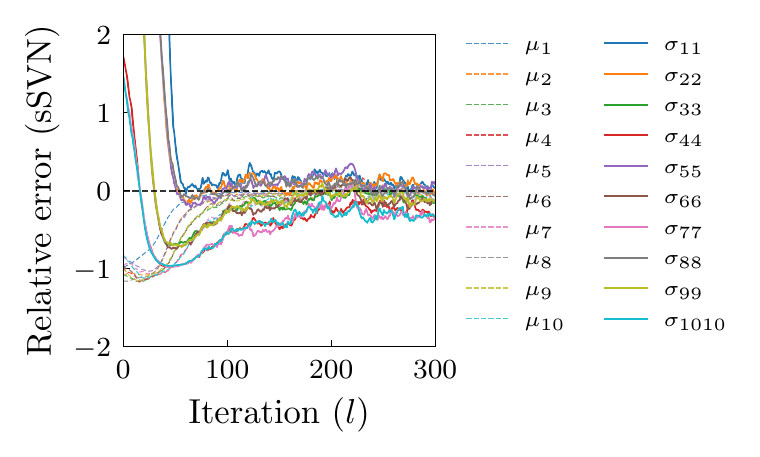}
   \label{subfig:10d-moments-ssvn}
\end{subfigure}
\begin{subfigure}[b]{0.4\textwidth}
   \includegraphics[width=1\linewidth]{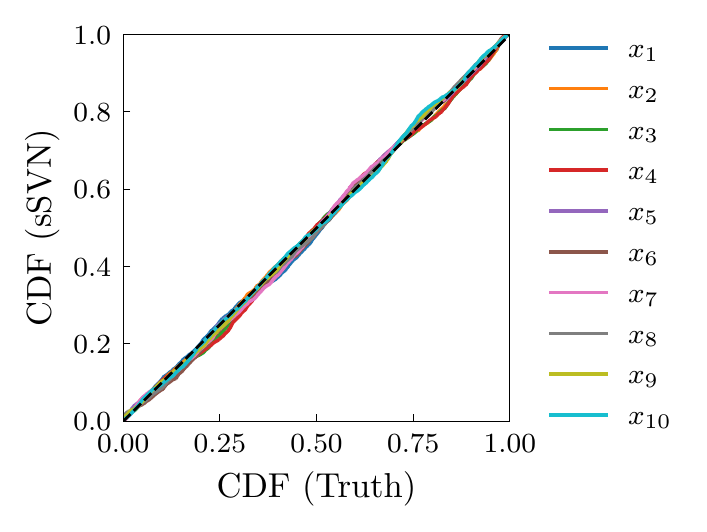}
   \label{subfig:10d-ppplot-ssvn}
\end{subfigure}
\caption{
Results of ten-dimensional Hybrid Rosenbrock run. (Top) Evolution of means and variances under sSVGD dynamics. (Middle) Evolution of means and variances under sSVN dynamics. (Bottom) P-P plot of samples collected from iterations $200-300$ compared to two-million ground truth samples.} \label{fig:10d-details}
\end{figure}

In this experiment we again use a Hybrid Rosenbrock, however with parameters $n_2=3$, $n_1=4$, $a=30$, $b=20$.
We begin with $N=300$ particles drawn from $\unif[-6,6]$, and run sSVGD and sSVN with $L=10000, 300$ iterations respectively with an identity metric kernel.
We plot the moment evolution in \cref{fig:10d-details}.
Similar to the five dimensional case, it appears that sSVN equilibrates after $200$ iterations, while sSVGD struggles.
As a measure of sample quality, we also present a P-P plot of the sSVN samples versus ground-truth, and find that they are in excellent agreement.
To further investigate the convergence of sSVN, \cref{fig:10d-corner} collects samples from the final $100$ iterations and presents a corner plot comparing to $30000$ ground truth samples.
We also compare the results of sSVN to sSVGD and ground truth in the one-dimensional marginals on the diagonal.
The sSVN samples reconstruct the posterior accurately, as opposed to sSVGD, which has not yet converged.

\begin{figure*}[!htb]
\centering
\includegraphics[width=1\linewidth]{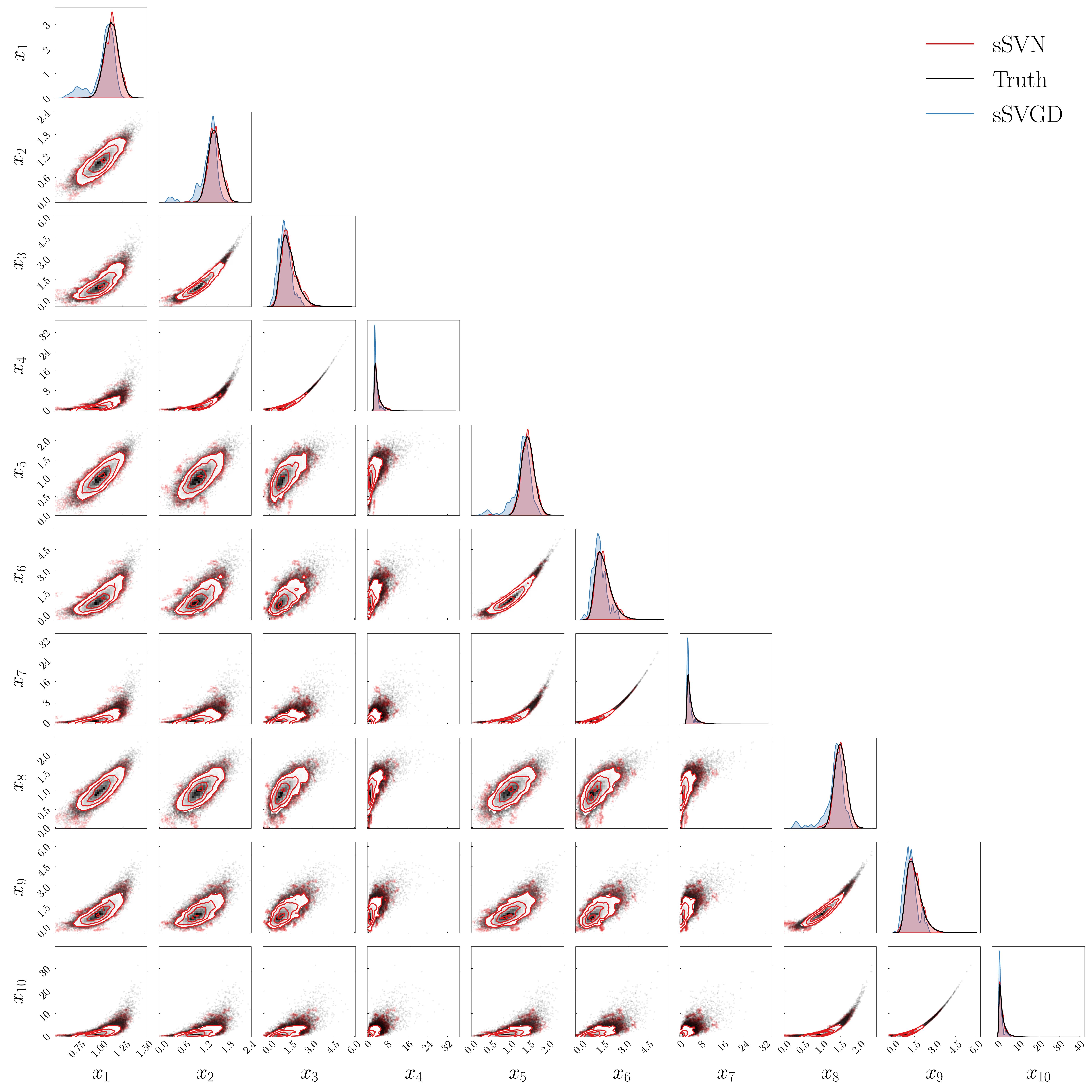}
\caption{Same as \cref{fig:5d-corner}, for the ten-dimensional Hybrid Rosenbrock density.}.
\label{fig:10d-corner}
\end{figure*}

%!TEX root = main.tex

\section{Discussion and further work} \label{sec:conclusion}
In contrast to its deterministic counterpart (SVN), sSVN may be made asymptotically correct by incorporating either a damping schedule or a Metropolis-Hastings correction, leading to a promising new approach to solving Bayesian inference tasks that require high-precision posterior reconstruction.
To demonstrate the performance of our proposed algorithm, we examined the flows and sample quality on a difficult class of test problems---the Hybrid Rosenbrock density---and showed that sSVN successfully reconstructs the posterior with at least three orders of magnitude fewer gradient evaluations of the log-likelihood than sSVGD.
In future work, it will be interesting to compare sSVN to other state of the art sampling algorithms such as dynamic nested sampling \cite{higsonDynamicNestedSampling2019,skillingNestedSamplingGeneral2006a} and Hamiltonian Monte Carlo \cite{betancourt2018conceptual, neal2011mcmc}.
Further improvements to the algorithm are possible as well.
For example, a Metropolis-Hastings correction may be implemented to eliminate all bias in the underlying Markov chain, random Fourier-feature kernels \cite{liu2018stein} may be used to improve the descent direction, and gradient evaluations of the log-likelihood may be used to mitigate issues related to auto-correlation \cite{riabiz2022optimal, hawkins2022online}.

\noindent {\bf Acknowledgements}
We thank Bassel Saleh and Peng Chen for discussions of Stein Variational methods at the outset of this work.
A.Z.~is supported by NSF Grant Number PHY-1912578.

\appendix
\bibliographystyle{amsplain}
\bibliography{main.bbl}
\onecolumn
%!TEX root = main.tex

\section{SVN simulates WNF}\label{sec:WNF}
Recall from \citet{wangInformationNewtonFlow2020} that the WNF direction is a conservative vector field $w$ satisfying the following equation
\begin{align*}
\nabla_i \nabla_j \brk[r]{\rho \nabla_i w_j} - \nabla_i \brk[r]{\rho w_j \nabla_i \nabla_j f} - \nabla_i \brk[r]{\rho \nabla_i f} - \lap \rho = 0 \:, \numberthis \label{eq:wnf-pde}
\end{align*}
and that the Wasserstein gradient flow direction is a conservative velocity field defined by $\vgf \defn -\nabla f - \nabla \ln \rho$, where the \textit{potential} $f$ is a scalar field.
Note that we are using Einstein summation convention, where repeated indices are summed over.
Therefore
\begin{align*}
\nabla_i \brk[r]{\rho \vgf_i}
= -\nabla_i \brk[s]!{\rho \brk[r]{\nabla_i f + \nabla_i \ln \rho}}
= -\nabla_i(\rho \nabla_i f) - \lap \rho \:,
\end{align*}
where $\lap$ denotes the Laplacian. Plugging this back into \cref{eq:wnf-pde} yields
\begin{align*}
0 &= \nabla_i \nabla_j (\rho \nabla_i w_j) - \nabla_i (\rho w_j \nabla_i \nabla_j f) + \nabla_i(\rho \vgf_i) \\
&= \nabla_i \brk[s]!{\rho \nabla_j \ln \rho \nabla_i w_j + \rho \nabla_i \nabla_j w_j - \rho w_j \nabla_i \nabla_j f + \rho \vgf_i} \\
&= \nabla_j \ln \rho \nabla_i w_j + \nabla_i \nabla_j w_j - w_j \nabla_i \nabla_j f + \vgf_i + \frac{\xi_i}{\rho} \\
&= -(\wnfhess w)_i + \vgf_i + \frac{\xi_i}{\rho} \:, \numberthis \label{eq:wnf-gf-mod}
\end{align*}
where $\xi$ is a divergence-free vector field, and we have defined
\begin{equation}\label{}
(\wnfhess w)_i = - \nabla_j \ln \rho \nabla_i w_j - \nabla_i \nabla_j w_j + w_j \nabla_i \nabla_i f.
\end{equation}
\cref{eq:wnf-gf-mod} illustrates how the Newton flow $w$ is related to the gradient flow $\vgf$.

We will now show that the weak form of \cref{eq:wnf-gf-mod} is releated to the variational characterization of the SVN direction given in Theorem 1 of \citet{detommasoSteinVariationalNewton2018a}.
Suppose $v$ is a conservative vector field restricted to the vector-valued reproducing kernel Hilbert space (RKHS) defined by $\rkhsd \defn \rkhs \times \rkhs \times \cdots \times \rkhs$, where $\rkhs$ is an RKHS with kernel $k$.
Let $T$ denote the embedding operator, which is defined by $(Tv)(y) = \int \rho(x)v(x)k(x,y)\dfl{x}$, and let $\vlrho$ denote the space of vector fields with finite squared norm with respect to $\rho$.
Then by the embedding property \cite{steinwart2008support} we have
\begin{equation}\label{eq:wnf-solve}
\brk[a]1{\wnfhess w - \vgf, v}_{\vlrho} = \brk[a]1{T (\wnfhess w) - T \vgf, v}_{\rkhsd} = \int \xi \cdot v \dfl{x} = 0 \:,
\end{equation}
where the last equality holds since $\xi$ and $v$ are orthogonal.
It has been shown in Theorem 2 of \citet{liuUnderstandingAcceleratingParticlebased2019} that $\brk[a]1{\vgf, v}_{\vlrho} = \brk[a]1{\vsvgd, v}_{\rkhsd}$.
Building on this, we have
\begin{align*}\label{eq:wnf-svn}
\brk[s]!{T(\wnfhess w)}_i(y) &= \int \rho(x)\brk[s]!{- \nabla_j \ln \rho(x) \nabla_i w_j(x) - \nabla_i \nabla_j w_j(x) + w_j(x) \nabla_i \nabla_j f(x)} k(x,y) \dfl{x} \\
&= \int \brk[s]!{- \nabla_j \rho(x) \nabla_i w_j(x) k(x,y) - \rho(x) \nabla_i \nabla_j w_j(x) k(x,y) + \rho(x) \brk[a]1{w_j(\cdot), k(x, \cdot)}_{\rkhs} \nabla_i \nabla_j f(x) k(x,y)} \dfl{x} \\
&=\int \rho(x) \nabla_i w_j(x) (\nabla_1)_j k(x,y) \dfl{x} + \brk[a]1{w_j(\cdot), \int \rho(x) \nabla_i \nabla_j f(x) k(x,\cdot) k(x,y) \dfl{x}}_{\rkhs} \\
&=\brk[a]1{w_j(\cdot), \expv_{x\sim \rho} \brk[s]!{(\nabla_1)_i k(x, \cdot) (\nabla_1)_j k(x,y) + \nabla_i \nabla_j f(x) k(x,\cdot) k(x,y)}}_{\rkhs} = \brk[a]!{w_j(\cdot), h_{ij}(\cdot, y)}_{\rkhs},
\numberthis
\end{align*}
where the last equality defines $h(\cdot, y)$, which is equivalent to the SVN Hessian of \cref{eq:blocks} when evaluated at two particles $z_m$ and $z_n$, and when identifying $f=-\ln\pi$.
The results \cref{eq:wnf-svn} and \cref{eq:wnf-solve}, together with this definition of $f$, reproduce the characterization of the SVN direction in \citet{detommasoSteinVariationalNewton2018a}\footnote{Where we have corrected a typo in Theorem 1 of \citet{detommasoSteinVariationalNewton2018a}.}.
Thus, we may understand SVN as an RKHS approximation to WNF which does not enforce that the velocity field be conservative.
\begin{remark}[]\label{}
This result allows us to understand sSVN as an MCMC algorithm that uses as its proposal an approximated Wasserstein Newton step, plus a random forcing term which depends on the Hessian.
In this way, sSVN is very similar to stochastic Newton (SN) discussed in \citet{martin2012stochastic} and \cref{sec:related-work}.
However, a key distinction is that sSVN yields an interacting particle system and performs its optimization on the space of probability measures, as opposed to parameter space.
\end{remark}

\section{Proofs}
\label{sec:Proofs}
%%%%%%%%%%%%%%%%%%%%%%%%%%%%%%%%%%%%%%%%%%
% Particle to dimension basis explanation
%%%%%%%%%%%%%%%%%%%%%%%%%%%%%%%%%%%%%%%%%%
\begin{figure}
\centering
\begin{subfigure}[]{.5\columnwidth}
  \centering
  \includegraphics{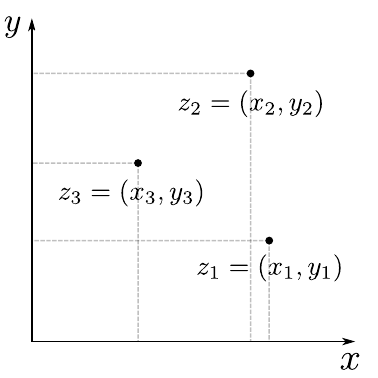}
\end{subfigure}%
\begin{subfigure}[]{.3\columnwidth}
  \centering
$$
% \begin{pNiceArray}{CCC}
\left[
\begin{array}{ccc}
x_1\\
y_1\\
\hline
x_2\\
y_2\\
\hline
x_3\\
y_3
% \end{pNiceArray}
\end{array}
\right]
$$
  \caption{}
  \label{fig:1-phi}
\end{subfigure}%
\begin{subfigure}[]{.3\columnwidth}
  \centering
$$
% \begin{pNiceArray}{CCC}
\left[
\begin{array}{ccc}
x_1\\
x_2\\
x_3\\
\hline
y_1\\
y_2\\
y_3
\end{array}
\right]
% \end{pNiceArray}
$$
  \caption{}
  \label{fig:1-gamma}
\end{subfigure}
\caption{The index functions $\phi(m,i)$ and $\gamma(i,m)$ return the index of the $i^{\text{th}}$ coordinate of particle $m$ in the particle and dimension ordering respectively. For example, if $N=3, d=2$, then the vector $z$ in the particle ordering (a) has $z_{\phi(2,1)}=x_2$, while in the dimension ordering (b) has $z_{\gamma{(1,2)}}=y_1$.}
\label{fig:coordinate-to-particle}
\end{figure}
%%%%%%%%%%%%%%%%%%%%%%%%%%%%%%
%%%%%%%%%%%%%%%%%%%%%%%%%%%%%%
Please refer to \cref{fig:coordinate-to-particle} for the definitions of the index maps $\phi$ and $\gamma$.
\subsection{Proof of \cref{prop:SVGD-MCMC-equivalence}}
\begin{proof}
We begin by taking \cref{eq:SVGDrecipeform} and expressing it in index notation
\begin{align*}
\diff{z_{\phi(m,i)}}{t} = \sum_{j,n} \brk[s]1{K_{\phi(m,i)\phi(n,j)} \nabla_{\phi(n,j)}  \ln \pi+ \nabla_{\phi(n,j)} K_{\phi(m,i)\phi(n,j)}} \,,
\numberthis
\end{align*}
where repeated indices are summed.
The application of $K$ on any vector, and in particular $\nabla \ln \pi$, is to kernel average each particle block. In order to see this we proceed in index notation
\begin{align*}
 \sum_{j,n} K_{\phi(m,i)\phi(n,j)} \nabla_{\phi(n,j)} \ln \pi
&= \frac{1}{N} \sum_{j,n} \brk[s]1{\kgram_{mn} \delta_{ij} \nabla_j \ln \pi(z_n)}
= \frac{1}{N}\sum_{n}\kgram_{mn} \nabla_i \ln \pi(z_n) \,.
\numberthis
\end{align*}
This shows the equivalence of the first terms on the right hand side of \cref{eq:SVGD-direction} and \cref{eq:SVGDrecipeform},
which is the driving force of SVGD.
Likewise, we have
\begin{align*}
 \sum_{j,n}\nabla_{\phi(n,j)}K_{\phi(m,i)\phi(n,j)}
&= \frac{1}{N} \sum_{j,n}
\brk[c]2{\delta_{ij} \brk[s]1{(\nabla_1)_j\, \kgram_{mn} \delta_{mn} + (\nabla_2)_j\, \kgram_{mn}}}
= \frac{1}{N}(\nabla_1)_i \kgram_{mm} + \frac{1}{N}\sum_n(\nabla_{2})_i \kgram_{mn} \,,
\numberthis
\end{align*}
which shows the equivalence of the second terms on the right hand side of of \cref{eq:SVGD-direction} and \cref{eq:SVGDrecipeform} when the kernel has a ``flat top''. In other words, for any $1 \le m \le N$, $\nabla_1 k(z_m, z_m) = 0$.
Finally, we show that $K$ inherits its positive (semi) definiteness from the kernel gram matrix $\kgram$.
Since $K_{\phi(m,i)\phi(n,j)} \defn \kgram_{mn} \delta_{ij}$ remains the same upon exchanging $m,n$ and $i,j$, $K$ is symmetric.
Finally, recall that $K$ is orthogonal to $D_K$, and therefore both matrices have identical eigenvalues. Furthermore, since $D_K$ is block diagonal, its eigenvalues are equivalent to the eigenvalues of $\kgram$, repeated $N$ times. Therefore $K$ inherits its definiteness from $\kgram$.
\end{proof}

\subsection{Proof that $P K P^{\top} = D_K$}
Clearly it is important to recognize that $P K P^{\top} = D_K$ in order for sSVGD to be tractable: instead of calculating the Cholesky decomposition of an $Nd \times Nd$ matrix, this result suggests that only a $N \times N$ decomposition is necessary. This may be seen as a consequence of the following result.
\begin{lemma}[Basis transformation]\label{}
Let $P$ be the permutation matrix which takes a vector $v \in \reals^{Nd}$ in the particle representation to the dimension representation. Then $M' = P M P^{\top}$, where
\begin{align*}
M &=
\left[
\begin{array}{ccc|c|ccc}
M_{11}^{11} & \cdots & M^{11}_{1D} & \hspace*{3mm} & M^{1N}_{11} & \cdots & M^{1N}_{1D} \\
\vdots & \ddots & \vdots&\hdots& \vdots & \ddots & \vdots \\
M^{11}_{D1} & \cdots & M^{11}_{DD} & & M^{1N}_{DD} & \cdots & M^{1N}_{DD} \\
\hline
&\vdots & & \ddots & &\vdots\\
\hline
\rule{0pt}{5mm} M^{N1}_{11} & \cdots & M^{N1}_{1D} & & M^{NN}_{11} & \cdots & M^{NN}_{1D} \\
\vdots & \ddots & \vdots&\hdots& \vdots & \ddots & \vdots \\
M^{N1}_{D1} & \cdots & M^{N1}_{DD} & & M^{NN}_{D1} & \cdots & M^{NN}_{DD}
\end{array}
\right]
&
M' &=
\left[
\begin{array}{ccc|c|ccc}
M_{11}^{11} & \cdots & M^{1N}_{11} & \hspace*{3mm} & M^{11}_{1D} & \cdots & M^{1N}_{1D} \\
\vdots & \ddots & \vdots&\hdots& \vdots & \ddots & \vdots \\
M^{N1}_{11} & \cdots & M^{NN}_{11} & & M^{N1}_{1D} & \cdots & M^{NN}_{1D} \\
\hline
&\vdots & & \ddots & &\vdots\\
\hline
\rule{0pt}{5mm} M^{11}_{D1} & \cdots & M^{1N}_{D1} & & M^{11}_{DD} & \cdots & M^{1N}_{DD} \\
\vdots & \ddots & \vdots&\hdots& \vdots & \ddots & \vdots \\
M^{N1}_{D1} & \cdots & M^{1N}_{D1} & & M^{N1}_{DD} & \cdots & M^{NN}_{DD}
\end{array}
\right]
\end{align*}
and $M^{mn}_{ij} \defn M_{\phi(m, i)\phi(n,j)}$.
\end{lemma}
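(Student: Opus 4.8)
The plan is to verify the claimed identity $M' = PMP^{\top}$ by tracking a single generic entry through the conjugation, using the index maps $\phi$ (particle ordering) and $\gamma$ (dimension ordering) of \cref{fig:coordinate-to-particle}. First I would set up notation: $P$ is the permutation matrix characterized by $(Pv)_{\gamma(i,m)} = v_{\phi(m,i)}$ for every particle $m$ and coordinate $i$, i.e.\ $P_{ab} = 1$ exactly when $a = \gamma(i,m)$ and $b = \phi(m,i)$ for some pair $(m,i)$. Equivalently, $P e_{\phi(m,i)} = e_{\gamma(i,m)}$ on standard basis vectors. The bijectivity of $(m,i)\mapsto\phi(m,i)$ and $(i,m)\mapsto\gamma(i,m)$ onto $\{1,\dots,Nd\}$ means $P$ is a genuine permutation matrix, hence orthogonal, so $P^{\top} = P^{-1}$ and $(P^{\top})_{ab} = P_{ba}$.

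The key computational step is then a direct index chase. For any $Nd\times Nd$ matrix $M$ and any indices written in the dimension ordering as $\gamma(i,m)$ and $\gamma(j,n)$, I would compute
\begin{align*}
(PMP^{\top})_{\gamma(i,m)\,\gamma(j,n)}
&= \sum_{a,b} P_{\gamma(i,m)\,a}\, M_{ab}\, (P^{\top})_{b\,\gamma(j,n)}
= \sum_{a,b} P_{\gamma(i,m)\,a}\, M_{ab}\, P_{\gamma(j,n)\,b} \\
&= M_{\phi(m,i)\,\phi(n,j)} = M^{mn}_{ij},
\end{align*}
where the collapse of each sum uses that $P_{\gamma(i,m)\,a} = 1$ precisely for the single value $a = \phi(m,i)$ and likewise $P_{\gamma(j,n)\,b}=1$ precisely for $b = \phi(n,j)$, by the defining property of $P$. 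This shows that the entry sitting in row $\gamma(i,m)$, column $\gamma(j,n)$ of $PMP^{\top}$ equals $M^{mn}_{ij}$, which is exactly the pattern displayed for $M'$: blocking $M'$ into $d\times d$ super-blocks indexed by coordinate pairs $(i,j)$, the $(i,j)$ super-block is the $N\times N$ matrix with $(m,n)$ entry $M^{mn}_{ij}$. Finally I would note that reading off $M'$ in the stated block form, and comparing with $M$ (whose natural blocking is into $N\times N$ super-blocks indexed by particle pairs, the $(m,n)$ super-block having $(i,j)$ entry $M^{mn}_{ij}$), confirms the two displayed matrices are related exactly by this row/column relabeling.

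The only real obstacle is bookkeeping: being careful about the order of arguments in $\phi$ versus $\gamma$ (one takes $(\text{particle},\text{coord})$, the other $(\text{coord},\text{particle})$) and about which index of $P$ is the ``output'' index, so that the two delta-collapses land on the correct arguments of $M$. Once the convention $P e_{\phi(m,i)} = e_{\gamma(i,m)}$ is fixed, the computation is mechanical. To obtain $PKP^{\top} = D_K$ as an immediate corollary, I would specialize $M = NK$, so that $M^{mn}_{ij} = \kgram_{mn}\delta_{ij}$ by \cref{eq:svgd-diffusion-matrix}; the identity above then gives $(P K P^{\top})_{\gamma(i,m)\gamma(j,n)} = \tfrac1N\kgram_{mn}\delta_{ij}$, which vanishes unless $i = j$, i.e.\ $PKP^{\top}$ is block diagonal with $N$ identical blocks $\tfrac1N\kgram$ — precisely $D_K$ of \cref{eq:DK}.
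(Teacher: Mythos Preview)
Your argument is correct. The single-entry index chase $(PMP^{\top})_{\gamma(i,m)\gamma(j,n)} = M_{\phi(m,i)\phi(n,j)}$ is exactly what needs to be shown, and you handle the conventions for $\phi$, $\gamma$, and the orientation of $P$ carefully. The specialization to $K$ at the end is also fine (the factor of $N$ bookkeeping between $M=NK$ and $PKP^\top$ works out).

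Your route differs from the paper's. The paper does not do an index computation at all; instead it rewrites $PMP^{\top} = \big(P(PM)^{\top}\big)^{\top}$ and then tracks the block structure of $M$ visually through the four operations ``apply $P$, transpose, apply $P$, transpose,'' displaying the intermediate $Nd\times Nd$ matrices explicitly at each stage. Your approach is shorter and more mechanical once the defining relation $Pe_{\phi(m,i)}=e_{\gamma(i,m)}$ is fixed, and it makes the corollary $PKP^{\top}=D_K$ a genuine one-line specialization. The paper's approach trades compactness for a pictorial demonstration of how the super-block structure gets reshuffled, which some readers may find more transparent; neither argument relies on anything the other does not.
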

\begin{proof}
Note that $P M P^{\top} = \brk[r]1{P(PM)^{\top}}^{\top}$. Beginning with $M$ we have
\begin{align*}
\xrightarrow{\textrm{Apply} P}
\left[
\begin{array}{ccc|c|ccc}
M_{11}^{11} & \cdots & M^{11}_{1D} & \hspace*{3mm} & M^{1N}_{11} & \cdots & M^{1N}_{1D} \\
\vdots & \ddots & \vdots&\hdots& \vdots & \ddots & \vdots \\
M^{N1}_{11} & \cdots & M^{N1}_{1D} & & M^{NN}_{11} & \cdots & M^{NN}_{1D} \\
\hline
&\vdots & & \ddots & &\vdots\\
\hline
\rule{0pt}{5mm} M^{11}_{D1} & \cdots & M^{11}_{DD} & & M^{1N}_{D1} & \cdots & M^{1N}_{DD} \\
\vdots & \ddots & \vdots&\hdots& \vdots & \ddots & \vdots \\
M^{N1}_{D1} & \cdots & M^{N1}_{DD} & & M^{NN}_{D1} & \cdots & M^{NN}_{DD}
\end{array}
\right]
\xrightarrow{\textrm{Transpose}}
\left[
\begin{array}{ccc|c|ccc}
M_{11}^{11} & \cdots & M^{N1}_{11} & \hspace*{3mm} & M^{11}_{D1} & \cdots & M^{N1}_{D1} \\
\vdots & \ddots & \vdots&\hdots& \vdots & \ddots & \vdots \\
M^{11}_{1D} & \cdots & M^{N1}_{1D} & & M^{11}_{DD} & \cdots & M^{N1}_{DD} \\
\hline
&\vdots & & \ddots & &\vdots\\
\hline
\rule{0pt}{5mm} M^{1N}_{11} & \cdots & M^{NN}_{11} & & M^{1N}_{D1} & \cdots & M^{NN}_{D1} \\
\vdots & \ddots & \vdots&\hdots& \vdots & \ddots & \vdots \\
M^{1N}_{1D} & \cdots & M^{NN}_{1D} & & M^{1N}_{DD} & \cdots & M^{NN}_{DD}
\end{array}
\right]
\end{align*}
%%%%%%%%%%%%%%%%%%%%%%%%%%%%%%%%%%%%%%%%%%%%%
\begin{align*}
\xrightarrow{\textrm{Apply} P}
\left[
\begin{array}{ccc|c|ccc}
M_{11}^{11} & \cdots & M^{N1}_{11} & \hspace*{3mm} & M^{11}_{D1} & \cdots & M^{N1}_{D1} \\
\vdots & \ddots & \vdots&\hdots& \vdots & \ddots & \vdots \\
M^{1N}_{11} & \cdots & M^{NN}_{11} & & M^{1N}_{D1} & \cdots & M^{NN}_{D1} \\
\hline
&\vdots & & \ddots & &\vdots\\
\hline
\rule{0pt}{5mm} M^{11}_{1D} & \cdots & M^{11}_{DD} & & M^{11}_{DD} & \cdots & M^{N1}_{DD} \\
\vdots & \ddots & \vdots&\hdots& \vdots & \ddots & \vdots \\
M^{1N}_{1D} & \cdots & M^{NN}_{1D} & & M^{1N}_{DD} & \cdots & M^{NN}_{DD}
\end{array}
\right]
\xrightarrow{\textrm{Transpose}}
\left[
\begin{array}{ccc|c|ccc}
M_{11}^{11} & \cdots & M^{1N}_{11} & \hspace*{3mm} & M^{11}_{1D} & \cdots & M^{1N}_{1D} \\
\vdots & \ddots & \vdots&\hdots& \vdots & \ddots & \vdots \\
M^{N1}_{11} & \cdots & M^{NN}_{11} & & M^{N1}_{1D} & \cdots & M^{NN}_{1D} \\
\hline
&\vdots & & \ddots & &\vdots\\
\hline
\rule{0pt}{5mm} M^{11}_{D1} & \cdots & M^{1N}_{D1} & & M^{11}_{DD} & \cdots & M^{1N}_{DD} \\
\vdots & \ddots & \vdots&\hdots& \vdots & \ddots & \vdots \\
M^{N1}_{D1} & \cdots & M^{1N}_{D1} & & M^{N1}_{DD} & \cdots & M^{NN}_{DD}
\end{array}
\right]
\end{align*}
\end{proof}
The result follows from observing that the $\delta_{ij}$ term in $K$ kills off all off diagonal blocks.

\subsection{Proof of \cref{thm:SVN-recipe}}
\begin{proof}
We show that \cref{eq:svnsde} may be derived from \cref{eq:ito-recipe} with diffusion matrix $\dsvn \defn N K H^{-1} K$, and curl matrix $Q=0$. Direct substitution yields
\begin{align*}
\dfl{z} &= \brk[r]{\dsvn \nabla \ln \pi + \nabla \cdot \dsvn}\dfl{t} + \sqrt{2 \dsvn} \dfl{B} \\
&= \brk[s]1{(NKH^{-1}K) \nabla \ln \pi + \nabla \cdot \brk[r]{N K H^{-1} K}}\dfl{t} + \sqrt{2 \dsvn} \dfl{B} \,.
\end{align*}
We focus only on the drift term and proceed in index notation, expanding the divergence and collecting terms
\begin{align*}
N K_{ab} H^{-1}_{bc} K_{ce} \nabla_e \ln \pi + N \nabla_e(K_{a b} H^{-1}_{bc} K_{ce})
&= \overbrace{N K_{ab}H^{-1}_{bc}(K_{ce}\nabla_e \ln \pi + \nabla_e K_{ce})}^{\vsvn}
+ \overbrace{\nabla_e(K_{ab} H^{-1}_{bc}) K_{ce}}^{\vdet} \,,
\numberthis \label{eq:svn-corrected-update}
\end{align*}
which gives the stated form of $\vdet$.
Finally, since $K$ is invertible, $KH^{-1}K$ is a congruence transformation of $H^{-1}$, and thus if $H^{-1}$ is positive definite, so is $\dsvn$.
\end{proof}
\begin{remark}
The assumption that $H$ be strictly positive definite is necessary to ensure that $\vsvn$ leads to a valid search direction.
Indeed, this is not true in general, and requires careful consideration.
See the discussion preceding \cref{thm:SVN-recipe} and Levenberg damping for a discussion on how to ensure $H$ is positive definite.
\end{remark}
\begin{remark}[]\label{}
We may express $\vdet$ in the form
\begin{align*}
\vdet_a & =N[(\nabla_e(K_{ab}) H^{-1}_{bc}K_{ce}
- K_{ab}H^{-1}_{bf} (\nabla_e H_{fg})  H^{-1}_{g c} K_{ce}]
\end{align*}
using the identity
$\nabla_e H^{-1}_{bc} = - H_{b f}^{-1} (\nabla_e H_{fg}) H^{-1}_{f c}$.
This expression illustrates that third derivatives are required to compute $\vdet$ in general, and second derivatives are required even with the Gauss Newton approximation. Note that similar terms appear in other higher-order flows, and are often neglected. See the paragraph on asymptotic correctness in \cref{subsec:practical-algorithm} for further discussion.
\end{remark}

\section{Scaling improvements} \label{sec:scaling}
In \cref{algo:ssvn} we form the SVN Hessian $H$, find the Cholesky decomposition of $H$, and use the decomposition to find the descent direction and calculate the noise.
However, for large $N, d$, the Cholesky decomposition and the associated storage requirements may be costly.
In this section we propose an alternative scheme utilizing conjugate gradients---whose iterations may be terminated when desired, and only requires the ability to evaluate matrix-vector products, thus circumventing the need to form $H$.

Observe that
\begin{align*}
z^{l+1} &= z^l + \tau N K H^{-1} \vsvgd + \sqrt{\tau}\gauss(0, 2 N K H^{-1} K) \\
&= z^l + \tau N K H^{-1} \vsvgd  + \sqrt{2 N \tau}KH^{-1} H \gauss(0,H^{-1}) \\
&= z^l + \tau NKH^{-1}\brk[s]!{\vsvgd + \sqrt{\frac{2}{N \tau}} \gauss(0, H)} \numberthis \label{eq:scalable-sSVN}
\end{align*}
We now show that $\gauss(0, H)$ may be sampled efficiently.
To begin, note that the SVN Hessian may be re-expressed as
\begin{align*}
H = H^{(1)} + H^{(2)} = N K H_{\pi}K + H^{(2)} \,,
\numberthis
\end{align*}
where $H_{\pi}, H^{(2)} \in\realsndnd$ are both block diagonal matrices whose $m^{\rm th}$ $d \times d$ blocks are given by $\nabla^2 \ln \pi(z_m)$, and $\frac{1}{N}\sum_n \nabla_1 k(z_m, z_n) \nabla_1^{\top} k(z_m, z_n)$ respectively. Indeed, we can see by
\begin{align*}
N K H_{\pi}K &= N K_{\phi(m,i)\phi(l,e)} (H_{\pi})_{\phi(l,e)\phi(p,f)} K_{\phi(p,f)\phi(n,j)} \\
&= \frac{1}{N} \sum_{e,f} \sum_{l,p} \brk[s]1{\delta_{lp} \kgram_{ml}  \kgram_{pn} \delta_{ie} \delta_{fj} \nabla_{e} \nabla_{f} \ln \pi (z_l) }\\
&=\frac{1}{N} \sum_{l} \brk[s]1{\kgram_{ml} \kgram_{ln} \nabla_{i}\nabla_j \ln \pi(z_l)} \,,
\end{align*}
that $H^{(1)}$ corresponds to the first term in \cref{eq:blocks}. The noise term may thus be decomposed as
\begin{align*}
\gauss(0, H) = \gauss(0, N K H_{\pi} K + H^{(2)}) = \sqrt{N} K \gauss(0, H_{\pi}) + \gauss(0, H^{(2)}) \,. \numberthis \label{eq:noise-1}
\end{align*}
Now, since $H_{\pi}$ is block diagonal, the task of drawing the first term in \cref{eq:noise-1} decomposes into $N$ individual $d \times d$ subproblems, which may be solved with $N \times (d \times d)$ Cholesky decompositions. The second term simplifies as well. The $m^{\rm th}$ block of $\gauss(0, H^{(2)})$ may be evaluated with
\begin{align*}
\gauss \brk[r]2{0, \frac{1}{N}\sum_n \nabla_1 k(z_m, z_n) \nabla_1^{\top} k(z_m, z_n)} &= \frac{1}{\sqrt{N}}\sum_n \gauss \brk[r]2{0,  \nabla_1 k(z_m, z_n) \nabla_1^{\top} k(z_m, z_n)} \\
&= \frac{1}{\sqrt{N}} \sum_n \nabla_1 k(z_m, z_n) \gauss_{mn} \,, \numberthis \label{eq:noise-2}
\end{align*}
where for every $1\le m,n \le N$, $\gauss_{mn}$ denotes a sample from a standard normal.
Thus, a draw from $\gauss(0,H)$ has a cost of $\order(N d^3)$, as opposed to $\gauss(0,H^{-1})$, which has a (time complexity) cost of $\order(N^3 d^3)$.
Meanwhile, with the use of a CG solver, we no longer need to form the SVN Hessian, thus improving memory complexity by a factor of $N$.

This scheme is thus a simple modification of the original ``full SVN" algorithm. Namely, the only change is we instead work with a noise perturbed SVGD direction $\vsvgd_* = \vsvgd + \sqrt{2/(N \tau)} \gauss(0, H)$. A summary is provided in \cref{algo:ssvn-scale}.

\begin{algorithm2e}
\SetAlgoLined
\KwIn{Initialize ensemble $z^1$, $\tau>0$}
 \For{$l = 1, 2, \ldots, L$}{
Calculate perturbed SVGD direction $\vsvgd_*$ \cref{eq:scalable-sSVN} \;
Define method yielding Hessian-vector product $Hv$, $v \in \realsnd$ \;
CG solve: $H \vsvn_* = \vsvgd_*$ \;
$z^{l+1} \leftarrow z^l + \tau \vsvn_*(z^l)$ \;
 }
 \caption{Stochastic SVN (CG)}\label{algo:ssvn-scale}
\end{algorithm2e}

\paragraph{Remark on block diagonal SVN}
The block diagonal approximation, originally introduced in \citet{detommasoSteinVariationalNewton2018a}, considers only the diagonal blocks $h^{mm}$ where $1 \le m \le N$ of the Hessian $H$.
In turn, this decouples the solves necessary to perform \cref{eq:svn-linear-system} from an $Nd \times Nd$ to $N \times (d \times d)$ solves.
Finally, the approximation takes $\alpha$ to be the step direction directly, as opposed to using \cref{eq:svn-direction}.
This significantly reduces the burden of solving the linear system and improves scalability.
Unfortunately, this scheme does not naturally extend to the stochastic case.
Specifically, notice that if one were to design a stochastic block diagonal algorithm, the leading term would be $H^{-1} K \nabla \ln \pi$.
The primary issue with $H^{-1} K$ is that it is dense, and not symmetric.
The cost of the Cholesky decomposition remains $\order(N^3 d^3)$, at which point the full Hessian may as well be used.

\end{document}